\documentclass[conference]{IEEEtran}
\IEEEoverridecommandlockouts

\usepackage{cite}
\usepackage{url}
\usepackage{amsmath,amssymb,amsfonts}
\usepackage{algorithmic}
\usepackage{algorithm}
\usepackage{graphicx}
\usepackage{textcomp}
\usepackage{xcolor}
\usepackage{bm}

\newcommand{\E}{\mathbb{E}}
\newcommand{\Prb}{\mathbb{P}}
\newcommand{\N}{\mathbb{N}}
\newcommand{\eps}{\varepsilon}
\newcommand{\argmax}{\operatorname*{arg\,max}}
\newcommand{\UCB}{\operatorname{UCB}}
\newcommand{\LCB}{\operatorname{LCB}}
\newcommand{\tstrut}{\rule[-1.7ex]{0pt}{5.1ex}}

\newtheorem{theorem}{Theorem}
\newtheorem{lemma}[theorem]{Lemma}
\newtheorem{remark}[theorem]{Remark}

\newenvironment{proof}{\noindent\textit{Proof:}\,}{\hfill$\square$\medskip}

\begin{document}

\title{Improved Multiplayer Bandit Algorithms\\for Bernoulli Rewards}

\author{\IEEEauthorblockN{Khang Nguyen}
\IEEEauthorblockA{\textit{Dept.\ of Computer Science} \\
\textit{University of California, Los Angeles}\\
Los Angeles, CA, USA}
\and
\IEEEauthorblockN{Ricardo Parada}
\IEEEauthorblockA{\textit{Dept.\ of Computer Science} \\
\textit{University of California, Los Angeles}\\
Los Angeles, CA, USA}
\and
\IEEEauthorblockN{William Chang}
\IEEEauthorblockA{\textit{Dept.\ of Electrical and Computer Engineering} \\
\textit{University of California, Los Angeles}\\
Los Angeles, CA, USA}
}

\maketitle

\begin{abstract}
We study the multiplayer multi-armed bandit problem with information asymmetry under Bernoulli rewards, for three information structures: asymmetry in actions, in rewards, and in both. Replacing the Hoeffding-style confidence intervals of prior work with Kullback--Leibler (KL) divergence-based bounds gives strictly tighter regret guarantees in each case. We propose \texttt{mKL-UCB}, \texttt{mKL-UCB-Intervals} and \texttt{mKL-DSEE}, and show that the improvement factor is at least two by Pinsker's inequality and far larger when reward means are near zero or one. For asymmetry in rewards we prove that two arms' KL intervals separate after a deterministic number of samples, and that $M$ independent players accelerate elimination further.
\end{abstract}

\begin{IEEEkeywords}
Multiplayer bandits, information asymmetry, KL-UCB, regret analysis
\end{IEEEkeywords}

\section{Introduction}

The multi-armed bandit (MAB) problem is a foundational model for sequential decision-making under uncertainty \cite{lattimore2020bandit}, in which the optimal regret scales as $\Theta(\log T)$ with a constant determined by the Kullback--Leibler (KL) divergence between arm distributions \cite{lai1985asymptotically}. For Bernoulli rewards, KL-UCB \cite{garivier2011kl, cappe2013kullback} exploits Chernoff-type concentration for tighter finite-time bounds than generic sub-Gaussian analyses, replacing Hoeffding-style intervals of width $O(\sqrt{\log T / n})$ by KL intervals that are never wider and often much tighter.

Many modern applications involve several agents that must coordinate without direct communication, among them multi-stakeholder recommendation \cite{abdollahpouri2017multistakeholder} and decentralized network routing. Chang et al.\ \cite{chang2022online} introduced the \emph{information-asymmetric multiplayer bandit} framework for such settings, where $M$ players simultaneously select individual actions forming a joint action $\bm{a} = (a_1, \ldots, a_M)$ and the reward depends on this joint action. Players may or may not observe each other's actions, and may or may not receive the same reward realization; crucially, they cannot communicate during learning, though they may agree on a common strategy beforehand. A follow-up work \cite{chang2023optimal} refined the case of observable actions and independent rewards with an interval-based elimination strategy; extensions to metric and contextual settings appear in \cite{chang2025multiplayer, chang2025multiplayerContextual}. Unlike the cooperative bandit literature \cite{awerbuch2008competitive}, which typically permits some communication during play, this framework allows none.

Both \cite{chang2022online} and \cite{chang2023optimal} derive regret bounds under sub-Gaussian assumptions, which treat all bounded distributions alike: Hoeffding's inequality yields the same confidence width $\sqrt{2\log T / n}$ whether the reward is Bernoulli(0.01) or Bernoulli(0.5). Yet in the applications above rewards are binary---a user clicks or does not, a packet is delivered or dropped---and success probabilities are far from $1/2$: click-through rates are typically $1\text{--}5\%$, delivery rates in congested networks often above $95\%$. The KL divergence between Bernoulli distributions with nearby means satisfies $D_{KL}(\mu+\eps, \mu) \approx \eps^2/(2\mu(1-\mu))$ for small $\eps$, whereas the sub-Gaussian exponent is proportional to $\eps^2$ with a constant that does not adapt to $\mu$. The improvement factor $\frac{1}{2\mu(1-\mu)}$ is $2$ at $\mu=1/2$ (the worst case) but grows to $50$ at $\mu=0.01$ and $500$ at $\mu=0.001$, and the multiplayer setting amplifies it: the joint action space has $K^M$ arms, so the sample budget available to each arm is thin.

\textbf{Contributions.} We give three algorithms. For \textbf{Problem A} (asymmetry in actions), \texttt{mKL-UCB}, whose leading term $\sum_{\bm{a}} \Delta_{\bm{a}} \log T / D_{KL}(\mu_{\bm{a}}, \mu^*)$ matches the Lai--Robbins lower bound; for \textbf{Problem B} (asymmetry in rewards), \texttt{mKL-UCB-Intervals}, with a complete analysis of when KL intervals separate; and for \textbf{Problem C} (asymmetry in both), \texttt{mKL-DSEE}, a phased algorithm with improved tail bounds. Throughout, the improvement amounts to replacing squared-error terms $\eps^2$ by KL divergence terms $D_{KL}(\mu \pm \eps, \mu)$.

\section{Problem Setting and Preliminaries}\label{sec:setting}

At each round $t = 1, \ldots, T$, each of $M$ players $P_1,\ldots,P_M$ selects $a_i \in [K] := \{1,\ldots,K\}$, forming the joint action $\bm{a} = (a_1,\ldots,a_M) \in [K]^M$, and a reward $X_{\bm{a}}(t) \sim \mathrm{Bernoulli}(\mu_{\bm{a}})$ is drawn with unknown mean $\mu_{\bm{a}} \in [0,1]$. Players may agree on a common strategy before play but cannot communicate during it. In \textbf{Problem A} all players observe the same reward but not each other's actions; in \textbf{Problem B} players observe actions but receive independent realizations $X^{(i)}_{\bm{a}}(t) \overset{\text{i.i.d.}}{\sim} \mathrm{Bernoulli}(\mu_{\bm{a}})$; in \textbf{Problem C} they observe neither actions nor a common reward.

Let $\mu^* = \max_{\bm{a}} \mu_{\bm{a}}$, $\bm{a}^* \in \argmax_{\bm{a}} \mu_{\bm{a}}$, $\Delta_{\bm{a}} = \mu^* - \mu_{\bm{a}}$, $\Delta_{\max} = \max_{\bm a}\Delta_{\bm a}$, and $n_{\bm{a}}(T) = \sum_{t=1}^T \mathbf{1}\{\bm{a}_t = \bm{a}\}$, so that the expected cumulative regret is $R_T = \sum_{\bm{a} \in [K]^M} \Delta_{\bm{a}} \E[n_{\bm{a}}(T)]$. All algorithms break ties by the lexicographic order on $[K]^M$: $\bm{x} < \bm{y}$ if $x_i = y_i$ for all $i < j$ and $x_j < y_j$ for some $j \in [M]$. The KL divergence between Bernoulli distributions is $D_{KL}(p, q) = p\log\frac{p}{q} + (1\!-\!p)\log\frac{1\!-\!p}{1\!-\!q}$.

\begin{lemma}[Chernoff bound for Bernoulli]\label{lem:chernoff}
Let $X_1, \ldots, X_n \overset{\text{i.i.d.}}{\sim} \mathrm{Bernoulli}(\mu)$ with sample mean $\hat{\mu}$. Then
$\Prb(\hat{\mu} \geq \mu + \eps) \leq e^{-n D_{KL}(\mu+\eps,\, \mu)}$
and
$\Prb(\hat{\mu} \leq \mu - \eps) \leq e^{-n D_{KL}(\mu-\eps,\, \mu)}$.
\end{lemma}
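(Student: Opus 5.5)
We prove the upper-tail bound by the standard Chernoff (exponential Markov) argument and then optimize the exponent. The lower tail follows by symmetry. Throughout we assume $0<\eps$ and $\mu+\eps\le 1$ (resp.\ $\mu-\eps\ge 0$). Outside this range the event $\{\hat\mu\ge\mu+\eps\}$ is empty and the bound is trivial. At the endpoints $\mu+\eps=1$ or $\mu=0$ we use the conventions $0\log 0=0$ and $D_{KL}=+\infty$ when appropriate.

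\textbf{Upper tail.} Fix $\lambda>0$ and write $q=\mu+\eps$. By Markov's inequality and independence,
\begin{equation*}
\Prb(\hat\mu\ge q)=\Prb\bigl(e^{\lambda\sum_i X_i}\ge e^{\lambda n q}\bigr)\le e^{-\lambda n q}\bigl(1-\mu+\mu e^{\lambda}\bigr)^n = e^{-n\psi(\lambda)},
\end{equation*}
where $\psi(\lambda)=\lambda q-\log(1-\mu+\mu e^\lambda)$. Since $\psi$ is concave in $\lambda$, we maximize it by setting $\psi'(\lambda)=0$. This gives $\frac{\mu e^\lambda}{1-\mu+\mu e^\lambda}=q$, i.e.\ $e^{\lambda^*}=\frac{q(1-\mu)}{\mu(1-q)}$. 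Because $q>\mu$, this root satisfies $\lambda^*>0$, so it is admissible.

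Substituting $\lambda^*$, we get $1-\mu+\mu e^{\lambda^*}=\frac{1-\mu}{1-q}$. Hence
\begin{equation*}
\psi(\lambda^*)=q\log\tfrac{q(1-\mu)}{\mu(1-q)}-\log\tfrac{1-\mu}{1-q}=q\log\tfrac{q}{\mu}+(1-q)\log\tfrac{1-q}{1-\mu}=D_{KL}(q,\mu),
\end{equation*}
which is exactly the claimed bound.

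\textbf{Boundary case.} The only step needing care is $q=1$. Here $\lambda^*=\infty$ is not attained. Instead we observe directly that $\Prb(\hat\mu\ge 1)=\mu^n=e^{-nD_{KL}(1,\mu)}$. Equivalently, this is the limit of $\psi(\lambda)$ as $\lambda\to\infty$.

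\textbf{Lower tail.} Apply the upper-tail result to $Y_i=1-X_i\sim\mathrm{Bernoulli}(1-\mu)$ with deviation $\eps$. This gives $\Prb(\hat\mu\le\mu-\eps)\le e^{-nD_{KL}(1-\mu+\eps,\,1-\mu)}$. We then use the identity $D_{KL}(1-p,1-r)=D_{KL}(p,r)$, which is immediate from the definition, to obtain the stated form. The calculation is routine, and the boundary conventions are the only potential obstacle.
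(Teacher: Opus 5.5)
Your proof is correct. The paper states this lemma without proof as the standard Chernoff bound, and your argument is that standard derivation: exponential Markov with the optimized $\lambda^*=\log\frac{q(1-\mu)}{\mu(1-q)}$, a separate treatment of $q=1$ via $\Prb(\hat\mu\ge 1)=\mu^n$, and the reflection $X_i\mapsto 1-X_i$ together with $D_{KL}(1-p,1-r)=D_{KL}(p,r)$ for the lower tail.
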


By Pinsker's inequality $D_{KL}(p,q) \geq 2(p-q)^2$, the Chernoff exponent is always at least $2n\eps^2$, recovering Hoeffding's bound $\Prb(\hat{\mu} \geq \mu+\eps) \leq e^{-2n\eps^2}$ (and symmetrically for the lower tail), strictly for $\mu \neq 1/2$. Our bounds also use a quantitative monotonicity property of the relative entropy \cite[Lemma~10.2]{lattimore2020bandit}.

\begin{lemma}[KL monotonicity]\label{lem:kl-mono}
Let $p, q \in [0,1]$ and $\eps \geq 0$. (a)~$D_{KL}(\cdot, q)$ is convex with unique minimum at $q$, decreasing on $[0,q]$ and increasing on $[q,1]$; likewise $D_{KL}(p, \cdot)$ is convex with unique minimum at $p$, decreasing on $[0,p]$ and increasing on $[p,1]$. (b)~(Pinsker) $D_{KL}(p, q) \geq 2(p-q)^2$. (c)~If $p \leq q-\eps \leq q$, then $D_{KL}(p, q-\eps) \leq D_{KL}(p,q) - D_{KL}(q-\eps,q) \leq D_{KL}(p,q) - 2\eps^2$, and symmetrically if $q \leq q+\eps \leq p$.
\end{lemma}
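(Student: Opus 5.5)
We prove the three parts by direct calculus on the explicit formula $D_{KL}(p,q)=p\log\frac pq+(1-p)\log\frac{1-p}{1-q}$, with the usual conventions $0\log 0=0$ and $D_{KL}=+\infty$ when $q\in\{0,1\}$ and $p\neq q$. Boundary values will be handled at the end by continuity or by these conventions. The key observation for (c) is that the gap in the inequality is an exact algebraic expression whose sign can be read off.

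\textbf{Part (a).} For fixed $q\in(0,1)$ we have
\[
\partial_p D_{KL}(p,q)=\log\tfrac pq-\log\tfrac{1-p}{1-q},
\qquad
\partial_p^2 D_{KL}(p,q)=\tfrac1p+\tfrac1{1-p}>0 .
\]
So $D_{KL}(\cdot,q)$ is strictly convex, and its derivative vanishes only at $p=q$. This gives decrease on $[0,q]$, increase on $[q,1]$, and a unique minimum equal to $0$. For fixed $p$,
\[
\partial_q D_{KL}(p,q)=\frac{q-p}{q(1-q)},
\qquad
\partial_q^2 D_{KL}(p,q)=\frac{p}{q^2}+\frac{1-p}{(1-q)^2}>0 ,
\]
and the same conclusions follow for $D_{KL}(p,\cdot)$.

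\textbf{Part (b) (Pinsker).} Fix $p$ and set $g(q)=D_{KL}(p,q)-2(p-q)^2$, so that $g(p)=0$. By the derivative above,
\[
g'(q)=(q-p)\Bigl(\frac{1}{q(1-q)}-4\Bigr).
\]
Since $q(1-q)\le 1/4$, the bracket is nonnegative. Hence $g'$ has the sign of $q-p$, so $g$ attains its minimum $0$ at $q=p$, and $g\ge 0$ everywhere.

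\textbf{Part (c).} Write $r=q-\eps$, so $p\le r\le q$. Expanding the three divergences, the $p\log p$ terms cancel and we obtain the identity
\[
D_{KL}(p,q)-D_{KL}(p,r)-D_{KL}(r,q)=(p-r)\Bigl[\log\tfrac rq-\log\tfrac{1-r}{1-q}\Bigr].
\]
The factors have definite signs:
\begin{itemize}
\item $p-r\le 0$ because $p\le r$;
\item $\log\frac rq\le 0$ because $r\le q$;
\item $\log\frac{1-r}{1-q}\ge 0$ because $1-r\ge 1-q$.
\end{itemize}
So the bracket is $\le 0$, the product is $\ge 0$, and therefore $D_{KL}(p,q-\eps)\le D_{KL}(p,q)-D_{KL}(q-\eps,q)$. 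Applying (b) gives $D_{KL}(q-\eps,q)\ge 2\eps^2$, which yields the second inequality.

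For the symmetric case $q\le q+\eps\le p$, take $r=q+\eps$. The same identity holds with $p-r\ge 0$ and bracket $\ge 0$, so the product is again nonnegative.

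The only delicate point is the endpoints $q\in\{0,1\}$ or $r\in\{0,1\}$. There either both sides are $+\infty$ or the relevant terms vanish under the conventions, or one can pass to the limit using the continuity of $D_{KL}$ on $[0,1]\times(0,1)$.
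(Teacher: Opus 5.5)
Your proof is correct. The paper gives no argument of its own for Lemma~\ref{lem:kl-mono}; it quotes it from \cite[Lemma~10.2]{lattimore2020bandit}. Your calculus proof is therefore a self-contained substitute for that citation, not a rival to an in-paper proof.

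Parts (a) and (b) are the standard derivative computations. The only substantive step is (c), and your identity reads most cleanly as follows. The function $h(p):=D_{KL}(p,q)-D_{KL}(p,r)=p\log\frac{r}{q}+(1-p)\log\frac{1-r}{1-q}$ is affine in $p$, with slope $\log\frac{r}{1-r}-\log\frac{q}{1-q}\le 0$. So over $p\le r$ it is smallest at $p=r$, where $h(r)=D_{KL}(r,q)$.

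Written this way, your argument gives two things beyond the inequality itself.
\begin{itemize}
\item The slack is exact, $(r-p)\bigl[\log\frac{q}{1-q}-\log\frac{r}{1-r}\bigr]$, and it vanishes only when $p=q-\eps$ or $\eps=0$.
\item The same computation goes through unchanged in any one-parameter exponential family in mean parametrization, with $\log\frac{x}{1-x}$ replaced by the natural parameter. This is the form of (c) that the paper's closing claim (that Theorems~\ref{thm:probA}--\ref{thm:probB} transfer) tacitly needs. The Pinsker step of (c) is a separate, family-specific fact.
\end{itemize}

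One boundary remark should be sharpened. When $q=1>q-\eps$ (or $q=0<q+\eps$ in the symmetric case), it is not true that both sides are $+\infty$. There $D_{KL}(p,q-\eps)$ is finite while the middle expression is $\infty-\infty$. So the statement itself needs a convention in this case. The one your limit $q\uparrow 1$ supplies (middle term $+\infty$ for $p<q-\eps$) is the natural choice and should be stated explicitly.
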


Property~(c) is what makes KL bounds quantitatively tighter: it converts a perturbation of the second argument into an additive $2\eps^2$ slack inside the KL functional, which can be exchanged with a Chernoff exponent. Inverting Lemma~\ref{lem:chernoff} through Lemma~\ref{lem:kl-mono}(a) yields valid intervals: for $\alpha \geq 0$ the set $\{u : D_{KL}(\hat{\mu},u) \leq \alpha\}$ contains $\hat\mu$ and its endpoints trap $\mu$ with probability at least $1-2e^{-n\alpha}$ \cite[Cor.~10.4]{lattimore2020bandit}.

Standard UCB uses $\hat{\mu}_a(t) + \sqrt{2\log T/n_a(t)}$, whose width does not depend on $\hat\mu_a(t)$, whereas the KL set $\{\tilde\mu : D_{KL}(\hat{\mu}_a(t), \tilde\mu) \leq c/n_a(t)\}$ narrows as $\hat{\mu}_a(t)$ approaches $0$ or $1$, where the distribution is most peaked, and always lies inside the sub-Gaussian interval.

\section{Problem A: Asymmetry in Actions}\label{sec:probA}

Define the KL-UCB index over joint arms,
\begin{equation}\label{eq:klucb}
    \UCB^{\mathrm{KL}}_{\bm{a}}(t) = \max\!\Big\{\tilde{\mu} \in [0,1] : D_{KL}(\hat{\mu}_{\bm{a}}(t), \tilde{\mu}) \leq \tfrac{\log f(t)}{n_{\bm{a}}(t)}\Big\},
\end{equation}
with $f(t) = 1 + t\log^2 t$. The threshold $\log f(t)$ is the simplest threshold of the form $\log t + o(\log t)$ that makes the bound hold uniformly in time and yields asymptotic optimality \cite{garivier2011kl}.

\begin{algorithm}[t]
\caption{\texttt{mKL-UCB} for Problem A}\label{alg:probA}
\small
\begin{algorithmic}[1]
\REQUIRE Players $M$, arms per player $K$, horizon $T$
\STATE \textbf{Init:} For $t=1,\ldots,K^M$, play joint actions in lex.\ order.
\FOR{$t = K^M+1, \ldots, T$}
\STATE Compute $\UCB^{\mathrm{KL}}_{\bm{a}}(t)$ for all $\bm{a} \in [K]^M$ via \eqref{eq:klucb}.
\STATE Play $\bm{a}^*(t) = \min\big\{\bm{a} : \UCB^{\mathrm{KL}}_{\bm{a}}(t) = \max_{\bm{a}'} \UCB^{\mathrm{KL}}_{\bm{a}'}(t)\big\}$.
\STATE Observe common reward and update statistics.
\ENDFOR
\end{algorithmic}
\end{algorithm}

\textbf{Where the asymmetry enters.} Player $i$ sees only its own component $a_i$, so a priori it cannot tell which joint arm produced the reward it observed, and without that attribution it cannot maintain empirical means indexed by joint arms. Algorithm~\ref{alg:probA} works because the shared reward makes the players' information identical: all start from the same statistics, see the same rewards, and apply the same deterministic index rule with the same lexicographic tie-break, so each reconstructs the whole joint action of every round. The asymmetry is neutralized rather than absent, and neutralizing it restricts the algorithm class: randomized tie-breaking, or private randomization such as Thompson sampling with unshared seeds, would decouple the reconstructions and coordination would fail. Given reconstruction, the problem reduces to single-agent KL-UCB over the $K^M$ joint arms, which is where the multiplayer aspect exacts its price.

\begin{theorem}\label{thm:probA}
The expected regret of Algorithm~\ref{alg:probA} satisfies
\begin{align}\label{eq:probA-bound}
R_T &\leq \sum_{\bm{a}: \Delta_{\bm{a}} > 0} \inf_{\substack{\eps_1, \eps_2 > 0 \\ \eps_1 + \eps_2 < \Delta_{\bm{a}}}} \Delta_{\bm{a}} \bigg( \frac{\log f(T)}{D_{KL}(\mu_{\bm{a}} + \eps_1, \mu^* - \eps_2)} \notag \\
&\quad + \frac{1}{2\eps_1^2} + \frac{2}{\eps_2^2} \bigg).
\end{align}
In particular, $\limsup_{T\to\infty} R_T / \log T \leq \sum_{\bm{a}} \Delta_{\bm{a}}/D_{KL}(\mu_{\bm{a}}, \mu^*)$, matching the Lai--Robbins lower bound \cite{lai1985asymptotically}.
\end{theorem}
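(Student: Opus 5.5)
The plan is to first reduce to single-agent KL-UCB over the $K^M$ joint arms, using the reconstruction argument from the preceding paragraph. By induction on $t$, all players hold identical statistics $(\hat\mu_{\bm a}(t), n_{\bm a}(t))_{\bm a}$. The base case is the deterministic lexicographic initialization. The inductive step holds because the index \eqref{eq:klucb} and the lexicographic tie-break are deterministic functions of the shared history, and the reward is common. Hence every player computes the same $\bm a^*(t)$ and plays its component, so the realized joint action equals the one each player reconstructs. The process is then exactly single-player KL-UCB with threshold $\log f(t)$ on $K^M$ Bernoulli arms, and it suffices to bound $\E[n_{\bm a}(T)]$ for each suboptimal $\bm a$ and multiply by $\Delta_{\bm a}$.

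For a fixed suboptimal $\bm a$ and $\eps_1,\eps_2>0$ with $\eps_1+\eps_2<\Delta_{\bm a}$, I will use the standard finite-time decomposition in the style of Garivier--Capp\'e / \cite[Thm.~10.6]{lattimore2020bandit}. Playing $\bm a$ at round $t$ requires either $\UCB^{\mathrm{KL}}_{\bm a^*}(t)\le \mu^*-\eps_2$ (optimal arm underestimated) or $\UCB^{\mathrm{KL}}_{\bm a}(t)\ge \mu^*-\eps_2$. This gives
\[
\E[n_{\bm a}(T)] \le \E\Big[\sum_t \mathbf 1\{\UCB^{\mathrm{KL}}_{\bm a^*}(t)\le\mu^*-\eps_2\}\Big] + \E\Big[\sum_{s\ge1}\mathbf 1\{D_{KL}(\hat\mu_{\bm a,s},\mu^*-\eps_2)\le \log f(T)/s\}\Big],
\]
where $\hat\mu_{\bm a,s}$ is the mean of the first $s$ samples of $\bm a$, and $f$ is monotone so that $\log f(t)\le\log f(T)$.

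\textbf{Second term.} I split on whether $\hat\mu_{\bm a,s}\le \mu_{\bm a}+\eps_1$. On this event, Lemma~\ref{lem:kl-mono}(a) (with $\mu_{\bm a}+\eps_1<\mu^*-\eps_2$, so we are on the decreasing branch in the first argument) gives $D_{KL}(\hat\mu_{\bm a,s},\mu^*-\eps_2)\ge D_{KL}(\mu_{\bm a}+\eps_1,\mu^*-\eps_2)$. The indicator then forces $s\le \log f(T)/D_{KL}(\mu_{\bm a}+\eps_1,\mu^*-\eps_2)$, which is the leading term. Otherwise, Lemma~\ref{lem:chernoff} with Pinsker (Lemma~\ref{lem:kl-mono}(b)) gives $\sum_s \Prb(\hat\mu_{\bm a,s}>\mu_{\bm a}+\eps_1)\le\sum_s e^{-2s\eps_1^2}\le 1/(2\eps_1^2)$.

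\textbf{First term.} This is the main obstacle: one must control the optimal arm's index falling below $\mu^*-\eps_2$ uniformly over its random sample count. I plan to write it as $\sum_t \Prb(\exists s\le t: D_{KL}(\hat\mu_{*,s},\mu^*-\eps_2)\ge \log f(t)/s,\ \hat\mu_{*,s}<\mu^*-\eps_2)$. I will then apply Lemma~\ref{lem:kl-mono}(c) to split $D_{KL}(\hat\mu_{*,s},\mu^*-\eps_2)\le D_{KL}(\hat\mu_{*,s},\mu^*)-2\eps_2^2$. The event therefore implies $s\,D_{KL}(\hat\mu_{*,s},\mu^*)\ge \log f(t)+2s\eps_2^2$. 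A peeling or maximal Chernoff bound (as in \cite[Lemma~10.7]{lattimore2020bandit}) bounds the probability of the union over $s$ by $\sum_s e^{-\log f(t)-2s\eps_2^2}$, up to the usual constant. Summing over $t$, $\sum_t 1/f(t)$ converges, and the geometric sum in $s$ gives at most $1/(2\eps_2^2)$, times a constant that I expect to absorb into $2/\eps_2^2$. If that bookkeeping is tight, I would instead follow Lattimore--Szepesv\'ari's bound $\sum_t\Prb(\cdot)\le 2/\eps_2^2$ directly.

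Combining the two terms gives \eqref{eq:probA-bound} after taking the infimum. For the asymptotic statement, I will choose $\eps_1=\eps_2=\eps$, divide by $\log T$, and use $\log f(T)/\log T\to1$. Letting $\eps\to0$, continuity of $D_{KL}$ on $\{\mu_{\bm a}<\mu^*\}$ yields $\Delta_{\bm a}/D_{KL}(\mu_{\bm a},\mu^*)$. When $\mu^*=1$ the bound still holds, since $D_{KL}(\cdot,1-\eps)$ stays finite.
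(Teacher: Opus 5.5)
Your proposal is correct and follows the paper's route: reconstruction reduces Problem~A to single-agent KL-UCB on $[K]^M$, and the $\tau$--$\kappa$ decomposition of \cite[Thm.~10.6]{lattimore2020bandit} with Lemma~\ref{lem:kl-mono}(a,c) gives the two tail terms (for the first, a plain union bound over $s$ suffices, and since $\sum_{t\ge1}1/f(t)<4$ the constant is indeed absorbed into $2/\eps_2^2$). The differences are cosmetic: for the asymptotic claim the paper takes $\eps_1=\eps_2=1/\log\log T$ rather than fixing $\eps$ and letting it vanish after the $\limsup$, and the second indicator in your decomposition should be $\mathbf{1}\{\UCB^{\mathrm{KL}}_{\bm a}\ge\mu^*-\eps_2\}$ rather than the KL-ball condition, because the KL-ball condition fails at some samples with $\hat\mu_{\bm a,s}>\mu^*-\eps_2$ where $\bm a$ is still played, though your split on $\hat\mu_{\bm a,s}\le\mu_{\bm a}+\eps_1$ already covers those samples.
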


The proof is an instance of \cite[Thm.~10.6]{lattimore2020bandit} lifted to $[K]^M$, with $\eps_1,\eps_2$ the tolerances of its $\tau$-$\kappa$ decomposition: $\E[\tau] \leq 2/\eps_2^2$ and $\E[\kappa] \leq \log f(T)/D_{KL}(\mu_{\bm{a}}+\eps_1, \mu^*-\eps_2) + 1/(2\eps_1^2)$ by \cite[Lem.~10.7, 10.8]{lattimore2020bandit}, both relying on Lemma~\ref{lem:kl-mono}(a,c); the asymptotic claim follows with $\eps_i = 1/\log\log T$. Algorithm~\ref{alg:probA} is thus asymptotically optimal among consistent algorithms, the sub-Gaussian constant $\sum_{\bm{a}} 2/\Delta_{\bm{a}}$ being strictly larger by Pinsker.

\section{Problem B: Asymmetry in Rewards}\label{sec:probB}

Players now observe each other's actions but receive independent rewards, so they hold different empirical means and may disagree on the best arm; KL-UCB with independent statistics then gives linear regret \cite{chang2022online}. The solution \cite{chang2023optimal} is \emph{confidence interval-based elimination}: players share a desired set $\mathcal{D}$ of candidates and drop an arm when any player finds its interval strictly dominated by another's, signalling this by deviating from the expected action, which all players observe. For each player $i$ and joint arm $\bm{a}$, let $\hat{\mu}^i_{\bm{a}}(t)$ be player $i$'s empirical mean from $n_{\bm{a}}(t)$ samples and put
\begin{equation}\label{eq:int-B}
S^i_{\bm{a}}(t) = \Big\{\tilde{\mu} \in [0,1] : D_{KL}(\hat{\mu}^i_{\bm{a}}(t), \tilde{\mu}) \leq \tfrac{2\log T}{n_{\bm{a}}(t)}\Big\}.
\end{equation}
Its endpoints $\LCB^{i,\mathrm{KL}}_{\bm{a}}(t) = \min S^i_{\bm{a}}(t)$ and $\UCB^{i,\mathrm{KL}}_{\bm{a}}(t) = \max S^i_{\bm{a}}(t)$ define the interval $I^i_{\bm{a}}(t)$, and by Lemma~\ref{lem:chernoff} with a union bound, $\Prb(\mu_{\bm{a}} \notin I^i_{\bm{a}}(t)) \leq 2T^{-2}$.

\begin{remark}\label{rem:threshold}
The fixed threshold $2\log T$ is essential. An anytime threshold such as $\log(1+t\log^2 t)$ can briefly exclude the true mean early on; in Problem A this is harmless, since no arm is permanently discarded, but in Problem B elimination is permanent and observed by all, so one erroneous elimination of $\bm{a}^*$ makes every later round incur $\Delta_{\max}$ regret.
\end{remark}

\begin{lemma}[Disjointness time of two KL intervals]\label{lem:disjoint-pair}
Fix a player $i$, arms $\bm{a}, \bm{a}^*$ with $\mu_{\bm{a}} < \mu^*$, and $\eps_1, \eps_2 > 0$ with $\eps_1 + \eps_2 < \mu^* - \mu_{\bm{a}}$. Set $\tau' := \mu^* - \eps_2$ and
\begin{equation}\label{eq:disjoint-thresholds}
n_0 := \!\left\lceil\!\frac{2 \log T}{D_{KL}(\mu_{\bm{a}} + \eps_1,\, \tau')}\!\right\rceil,\quad n_0^* := \!\left\lceil\!\frac{2 \log T}{D_{KL}(\mu^*,\, \tau')}\!\right\rceil.
\end{equation}
On the concentration event $\mathcal{C}^i(t) := \{\hat{\mu}^i_{\bm{a}}(t) \leq \mu_{\bm{a}} + \eps_1\} \cap \{\hat{\mu}^i_{\bm{a}^*}(t) \geq \mu^*\}$,
\begin{equation}\label{eq:disjoint-deterministic}
\begin{aligned}
&n_{\bm{a}}(t) \geq n_0,\quad n_{\bm{a}^*}(t) \geq n_0^* \\
&\qquad\Longrightarrow\; \UCB^{i,\mathrm{KL}}_{\bm{a}}(t) \leq \tau' \leq \LCB^{i,\mathrm{KL}}_{\bm{a}^*}(t).
\end{aligned}
\end{equation}
The intervals at player $i$ are then disjoint and the elimination criterion of Algorithm~\ref{alg:probB} fires (replacing $n_0, n_0^*$ by $n_0+1, n_0^*+1$ makes it strict).
\end{lemma}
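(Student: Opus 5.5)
The implication \eqref{eq:disjoint-deterministic} is deterministic, so the plan is to verify each half separately using only the monotonicity properties of Lemma~\ref{lem:kl-mono}(a) and the defining inequality of $S^i_{\bm{a}}(t)$ in \eqref{eq:int-B}. No probability enters: everything happens on $\mathcal{C}^i(t)$.

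\emph{Upper endpoint of $\bm{a}$.} Write $\hat\mu := \hat{\mu}^i_{\bm{a}}(t) \leq \mu_{\bm{a}}+\eps_1 < \tau'$, the last inequality by $\eps_1+\eps_2<\mu^*-\mu_{\bm{a}}$. I would argue by contradiction that no $u\in S^i_{\bm{a}}(t)$ satisfies $u>\tau'$. For such $u$ we have $\hat\mu \leq \mu_{\bm{a}}+\eps_1<\tau'<u$. By Lemma~\ref{lem:kl-mono}(a), $D_{KL}(\hat\mu,\cdot)$ is increasing on $[\hat\mu,1]$, so $D_{KL}(\hat\mu,u) > D_{KL}(\hat\mu,\tau')$ (strictness from strict convexity with unique minimum). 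Also $D_{KL}(\cdot,\tau')$ is decreasing on $[0,\tau']$, so $D_{KL}(\hat\mu,\tau') \geq D_{KL}(\mu_{\bm{a}}+\eps_1,\tau')$. Finally, $n_{\bm{a}}(t)\ge n_0$ gives $D_{KL}(\mu_{\bm{a}}+\eps_1,\tau') \ge 2\log T/n_{\bm{a}}(t)$. Chaining these, $D_{KL}(\hat\mu,u) > 2\log T/n_{\bm{a}}(t)$, so $u\notin S^i_{\bm{a}}(t)$. Hence $\UCB^{i,\mathrm{KL}}_{\bm{a}}(t)=\max S^i_{\bm{a}}(t)\le\tau'$. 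The set is a closed interval containing $\hat\mu$ by convexity and continuity, so the max exists.

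\emph{Lower endpoint of $\bm{a}^*$.} This is the mirror image. Write $\hat\mu^* := \hat{\mu}^i_{\bm{a}^*}(t)\geq\mu^*>\tau'$. For $u<\tau'$, we have $D_{KL}(\hat\mu^*,u) > D_{KL}(\hat\mu^*,\tau')$ because the second argument is decreasing on $[0,\hat\mu^*]$. Then $D_{KL}(\hat\mu^*,\tau') \ge D_{KL}(\mu^*,\tau')$ because the first argument is increasing on $[\tau',1]$. Finally $D_{KL}(\mu^*,\tau') \ge 2\log T/n_{\bm{a}^*}(t)$ by $n_{\bm{a}^*}(t)\ge n_0^*$. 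Thus $\LCB^{i,\mathrm{KL}}_{\bm{a}^*}(t)\ge\tau'$. Note that $n_0,n_0^*$ are finite because $\mu_{\bm{a}}+\eps_1\neq\tau'\neq\mu^*$.

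\emph{Strictness and the main subtlety.} The delicate step is the boundary case. With $n=n_0$ exactly, equality $D_{KL}(\mu_{\bm{a}}+\eps_1,\tau')=2\log T/n$ can occur together with $\hat\mu=\mu_{\bm{a}}+\eps_1$, giving $\UCB=\tau'$. Symmetrically $\LCB=\tau'$ is possible, so the intervals may touch. With $n\ge n_0+1$ the ceiling gives $n\,D_{KL}(\mu_{\bm{a}}+\eps_1,\tau')>2\log T$ strictly. Repeating the chain with $u=\tau'$ included then shows $\tau'\notin S^i_{\bm{a}}(t)$, so $\UCB^{i,\mathrm{KL}}_{\bm{a}}(t)<\tau'$. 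Likewise $\LCB^{i,\mathrm{KL}}_{\bm{a}^*}(t)>\tau'$, which yields strict disjointness and hence the strict-domination elimination test.

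I would also handle degenerate means such as $\hat\mu\in\{0,1\}$ or $\tau'$ near the boundary. There I would use the convention $0\log 0=0$ and note that the monotonicity statements of Lemma~\ref{lem:kl-mono}(a) still hold, possibly with value $+\infty$, which only helps.
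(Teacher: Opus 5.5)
Your proof is correct and follows the paper's argument: monotonicity of $D_{KL}(\cdot,\tau')$ moves $\hat\mu$ to $\mu_{\bm a}+\eps_1$ (resp.\ $\mu^*$), $n\ge n_0$ (resp.\ $n_0^*$) gives $D_{KL}(\hat\mu,\tau')\ge 2\log T/n$, and monotonicity of $D_{KL}(\hat\mu,\cdot)$ then places the interval endpoint on the correct side of $\tau'$. Your handling of the touching case at $n=n_0$ and of the strict version with $n_0+1$ is more careful than the paper's parenthetical, and your direct bound $\mu_{\bm a}+\eps_1<\tau'$ is the right one: the paper's intermediate step $\mu_{\bm a}+\eps_1\le\tau'-\eps_2$ would need $\eps_1+2\eps_2\le\Delta_{\bm a}$, and the argument does not use it.
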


\begin{proof}
\emph{Suboptimal side.} On $\mathcal{C}^i(t)$ we have $\hat{\mu}^i_{\bm{a}}(t) \leq \mu_{\bm{a}} + \eps_1 \leq \tau' - \eps_2 < \tau'$ by $\eps_1+\eps_2 < \Delta_{\bm{a}}$, so $D_{KL}(\cdot, \tau')$ being decreasing on $[0,\tau']$ (Lemma~\ref{lem:kl-mono}(a)) gives $D_{KL}(\hat{\mu}^i_{\bm{a}}(t), \tau') \geq D_{KL}(\mu_{\bm{a}}+\eps_1, \tau')$. With $n_{\bm{a}}(t) \geq n_0$ and \eqref{eq:disjoint-thresholds},
\begin{equation}\label{eq:sub-bound}
n_{\bm{a}}(t)\, D_{KL}(\hat{\mu}^i_{\bm{a}}(t), \tau') \geq n_0\, D_{KL}(\mu_{\bm{a}}+\eps_1, \tau') \geq 2 \log T,
\end{equation}
so $\tau'$ lies outside the open ball $\{u : D_{KL}(\hat{\mu}^i_{\bm{a}}(t), u) < 2\log T/n_{\bm{a}}(t)\}$. Since $D_{KL}(\hat{\mu}^i_{\bm{a}}(t), \cdot)$ is increasing on $[\hat{\mu}^i_{\bm{a}}(t), 1]$, that ball meets $[\hat{\mu}^i_{\bm{a}}(t),1]$ in an interval with right endpoint $\UCB^{i,\mathrm{KL}}_{\bm{a}}(t)$, whence $\UCB^{i,\mathrm{KL}}_{\bm{a}}(t) \leq \tau'$.

\emph{Optimal side.} On $\mathcal{C}^i(t)$, $\hat{\mu}^i_{\bm{a}^*}(t) \geq \mu^* > \tau'$ and $D_{KL}(\cdot,\tau')$ is increasing on $[\tau',1]$, so $D_{KL}(\hat{\mu}^i_{\bm{a}^*}(t), \tau') \geq D_{KL}(\mu^*, \tau')$; \eqref{eq:sub-bound} with $n_0^*$ for $n_0$ puts $\tau'$ outside the ball at $\bm{a}^*$, and symmetrically $\LCB^{i,\mathrm{KL}}_{\bm{a}^*}(t) \geq \tau'$.
\end{proof}

\begin{remark}[Comparison with Hoeffding intervals]\label{rem:disjoint-vs-hoeff}
For Hoeffding intervals of half-width $\sqrt{2\log T/n}$ the analogous thresholds are $\lceil \log T/\eps_1^2\rceil$ and $\lceil \log T/\eps_2^2\rceil$; by Pinsker they dominate $n_0$ and $n_0^*$ whenever $\mu_{\bm{a}}+\eps_1+\eps_2 \leq \mu^*$. At $\mu_{\bm{a}} = 0.05$, $\mu^* = 0.15$, $\eps_1 = \eps_2 = 0.02$, $T = 10^5$ we get $n_0 \approx 535$ against $n_0^{\mathrm{Hoeff}} \approx 28{,}800$; this $\sim\!54\times$ ratio is why the KL intervals separate within $O(10^3)$ samples in our experiments while the Hoeffding ones do not separate within the horizon.
\end{remark}

\begin{algorithm}[t]
\caption{\texttt{mKL-UCB-Intervals} for Problem B}\label{alg:probB}
\small
\begin{algorithmic}[1]
\REQUIRE Players $M$, arms per player $K$, horizon $T$
\STATE Initialize desired set $\mathcal{D} \gets [K]^M$.
\FOR{$t = 1, \ldots, T$}
  \STATE Let $\bm{c}_t$ = next arm in $\mathcal{D}$ under agreed ordering.
  \IF{$\exists$ player $i$ and arm $\bm{b} \in \mathcal{D}$: $\LCB^{i,\mathrm{KL}}_{\bm{b}}(t) > \UCB^{i,\mathrm{KL}}_{\bm{c}_t}(t)$}
    \STATE Player $i$ deviates: plays any action $\neq \bm{c}_t[i]$.
  \ELSE
    \STATE All players play their component of $\bm{c}_t$.
  \ENDIF
  \STATE All players observe joint action $\bm{a}_t$; if $\bm{a}_t \neq \bm{c}_t$, all remove $\bm{c}_t$ from $\mathcal{D}$.
  \STATE Player $i$ observes $X^{(i)}_{\bm{a}_t}(t)$ and updates intervals.
\ENDFOR
\end{algorithmic}
\end{algorithm}

Since the thresholds of Lemma~\ref{lem:disjoint-pair} are uniformly smaller than their Hoeffding analogues, what remains is to show that $\mathcal{C}^i(t)$ holds at some player after few additional samples; this is where the multiplayer structure pays off.

\begin{lemma}[$M$-player suboptimal-arm count]\label{lem:disjoint-Mfold}
Fix a suboptimal $\bm{a}$ and $\eps_1, \eps_2 > 0$ with $\eps_1 + \eps_2 < \Delta_{\bm{a}}$, and set $\tau' := \mu^* - \eps_2$. Let $\hat{\mu}^i_{\bm{a},s}$ be player $i$'s empirical mean of $\bm{a}$ from $s$ samples, $\kappa_i := \sum_{s=1}^{T} \mathbf{1}\{D_{KL}(\hat{\mu}^i_{\bm{a},s}, \tau') \leq \tfrac{2\log T}{s}\}$ and $\bar{\kappa} := \min_{i \in [M]} \kappa_i$. Then
\begin{equation}\label{eq:Mfold-bound}
\E[\bar{\kappa}] \;\leq\; \frac{2\log T}{D_{KL}(\mu_{\bm{a}}+\eps_1,\, \tau')} + \frac{1}{M\, D_{KL}(\mu_{\bm{a}}+\eps_1,\, \mu_{\bm{a}})}.
\end{equation}
\end{lemma}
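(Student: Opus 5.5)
We follow the single-player argument of \cite[Lem.~10.8]{lattimore2020bandit} for each player, then exploit independence across players in the tail term. Write $d := D_{KL}(\mu_{\bm{a}}+\eps_1, \tau')$ and $\delta := D_{KL}(\mu_{\bm{a}}+\eps_1, \mu_{\bm{a}})$. For each player $i$ and each $s$ we split on the concentration event: if $\hat{\mu}^i_{\bm{a},s} \leq \mu_{\bm{a}}+\eps_1$, then $\hat\mu^i_{\bm a,s}<\tau'$, and as in the proof of Lemma~\ref{lem:disjoint-pair} (monotonicity, Lemma~\ref{lem:kl-mono}(a)) we get $D_{KL}(\hat{\mu}^i_{\bm{a},s}, \tau') \geq d$. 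The indicator in $\kappa_i$ can then fire only if $s d \leq 2\log T$, i.e.\ $s \leq 2\log T/d$. Hence, deterministically,
\begin{equation*}
\kappa_i \;\leq\; \frac{2\log T}{d} + \sum_{s=1}^{T} \mathbf{1}\{\hat{\mu}^i_{\bm{a},s} > \mu_{\bm{a}}+\eps_1\} =: \frac{2\log T}{d} + Z_i .
\end{equation*}
Taking the minimum over $i$ gives $\bar\kappa \leq 2\log T/d + \min_i Z_i$, so it remains to show $\E[\min_i Z_i] \leq 1/(M\delta)$.

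For the tail term, the naive bound $\E[\min_i Z_i]\le \E[Z_1]\le\sum_s e^{-s\delta}\le 1/\delta$ (Lemma~\ref{lem:chernoff}) loses the factor $M$. To recover it, I would use that the $Z_i$ are i.i.d.\ across players, since the reward streams $X^{(i)}$ are independent, and write $\E[\min_i Z_i] = \sum_{k\geq 1} \Prb(Z_1 \geq k)^M$. The goal is to bound $\Prb(Z_1\geq k)$ by something like $e^{-k\delta}$. The event $Z_1 \geq k$ requires that the mean exceed $\mu_{\bm a}+\eps_1$ at some sample size $s\geq k$. By a maximal Chernoff inequality (Ville/Doob applied to the exponential supermartingale $e^{\lambda S_s - s\psi(\lambda)}$ with the optimal $\lambda$ for level $\mu_{\bm a}+\eps_1$), this gives $\Prb(\exists s\geq k: \hat\mu_s > \mu_{\bm a}+\eps_1) \leq e^{-k\delta}$. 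Then
\[
\sum_{k\ge1} e^{-kM\delta} \le \int_0^\infty e^{-xM\delta}\,dx = \frac{1}{M\delta},
\]
which is the claimed second term.

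The main obstacle is exactly this maximal tail step: justifying $\Prb(Z_1\ge k)\le e^{-k\delta}$. A plain union bound over $s\ge k$ gives $e^{-k\delta}/(1-e^{-\delta})$, and raising this to the power $M$ costs a factor $(1-e^{-\delta})^{-M}$. I would first try the supermartingale maximal inequality, restricted to the time window $s \geq k$, for which the Chernoff exponent at time $k$ is $k\delta$. If that yields only a constant-factor loss, I would absorb the constant by slightly enlarging $\eps_1$ in the definition of the concentration event, or state the result with the constant, since the leading $\log T$ term is unaffected either way.
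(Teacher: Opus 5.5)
Your main line is a complete and correct proof, and it takes a different route from the paper. The difference falls exactly where the paper's own argument is unsound.

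\textbf{The paper's route and its flaw.} The paper sets $n_0=\lceil 2\log T/d\rceil$ and asserts the pathwise bound $\bar\kappa\le n_0+\sum_{s\ge n_0}\mathbf{1}\{\forall i:\hat\mu^i_{\bm a,s}>\mu_{\bm a}+\eps_1\}$. (In your notation $d=D_{KL}(\mu_{\bm a}+\eps_1,\tau')$ and $\delta=D_{KL}(\mu_{\bm a}+\eps_1,\mu_{\bm a})$.) After that it needs only independence at each fixed $s$ and the fixed-$s$ Chernoff bound to reach $\sum_s e^{-sM\delta}$. But the exchange of minimum and sum runs the wrong way. With $A_{i,s}=\{\hat\mu^i_{\bm a,s}>\mu_{\bm a}+\eps_1\}$, the true inequality is $\sum_s\mathbf{1}\{\bigcap_i A_{i,s}\}\le\min_i\sum_s\mathbf{1}\{A_{i,s}\}$, and the paper needs the reverse. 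If two players' empirical means exceed $\mu_{\bm a}+\eps_1$ only at disjoint sets of sample sizes beyond $n_0$, every $\kappa_i$ still counts those sizes while the all-players indicators contribute nothing. So the paper's chain really controls the smaller count $\sum_s\mathbf{1}\{\forall i: D_{KL}(\hat\mu^i_{\bm a,s},\tau')\le 2\log T/s\}$, not $\min_i\kappa_i$ as stated. That smaller count happens to be all that Step~2 of Theorem~\ref{thm:probB} uses.

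\textbf{What your route buys.} You keep the minimum outside the sum and use $\E[\min_i Z_i]=\sum_{k\ge1}\Prb(Z_1\ge k)^M$. The price is a time-uniform tail bound, because $\{Z_1\ge k\}$ is a crossing at some $s\ge k$ rather than at a fixed $s$. In exchange, you prove the lemma as written.

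\textbf{The hedge is unnecessary.} You can drop the hedge in your last paragraph, because the maximal step loses no constant. Let $x=\mu_{\bm a}+\eps_1$, let $S_s$ be the sum of the first $s$ rewards, let $\psi(\lambda)=\log(1-\mu_{\bm a}+\mu_{\bm a}e^{\lambda})$, and let $\lambda^\ast>0$ be the Chernoff-optimal parameter for level $x$, so that $\lambda^\ast x-\psi(\lambda^\ast)=\delta$.
\begin{itemize}
\item Then $M_s=\exp(\lambda^\ast S_s-s\psi(\lambda^\ast))$ is a nonnegative martingale with $M_0=1$.
\item The event $\hat\mu_s>x$ forces $M_s>e^{s\delta}\ge e^{k\delta}$ for every $s\ge k$.
\item Ville's inequality $\Prb(\sup_s M_s\ge c)\le 1/c$ therefore gives $\Prb(Z_1\ge k)\le e^{-k\delta}$ exactly.
\item Hence $\E[\bar\kappa]\le 2\log T/d+1/(e^{M\delta}-1)\le 2\log T/d+1/(M\delta)$.
\end{itemize}
This step is essential, not a cosmetic refinement, and neither of your fallbacks would give the stated inequality. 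The union-bound version loses a factor $(1-e^{-\delta})^{-M}$, and with it the entire $1/M$ gain when $\delta$ is small. Enlarging $\eps_1$ would change both terms, so it would not prove the bound for the given $\eps_1$.
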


\begin{proof}
Set $n_0 := \lceil 2\log T / D_{KL}(\mu_{\bm{a}}+\eps_1, \tau')\rceil$. By Lemma~\ref{lem:kl-mono}(a), for $s \geq n_0$ the event $\{D_{KL}(\hat{\mu}^i_{\bm{a},s}, \tau') \leq 2\log T/s\}$ implies $\hat{\mu}^i_{\bm{a},s} > \mu_{\bm{a}} + \eps_1$, since otherwise $D_{KL}(\hat{\mu}^i_{\bm{a},s}, \tau') \geq D_{KL}(\mu_{\bm{a}}+\eps_1, \tau') \geq 2\log T/n_0 \geq 2\log T/s$. Hence $\bar{\kappa} \leq n_0 + \sum_{s=n_0}^{T} \mathbf{1}\{\forall i:\, \hat{\mu}^i_{\bm{a},s} > \mu_{\bm{a}}+\eps_1\}$ and
\begin{align*}
\E[\bar{\kappa}]
&\leq n_0 + \sum_{s=n_0}^{\infty} \Prb\!\big(\forall i:\, \hat{\mu}^i_{\bm{a},s} > \mu_{\bm{a}}+\eps_1\big) \\
&\stackrel{(\star)}{=} n_0 + \sum_{s=n_0}^{\infty} \prod_{i=1}^{M} \Prb\!\big(\hat{\mu}^i_{\bm{a},s} > \mu_{\bm{a}}+\eps_1\big) \\
&\stackrel{(\dagger)}{\leq} n_0 + \sum_{s=n_0}^{\infty} e^{-s M D_{KL}(\mu_{\bm{a}}+\eps_1,\, \mu_{\bm{a}})} \\
&\stackrel{(\ddagger)}{\leq} n_0 + \frac{1}{M\, D_{KL}(\mu_{\bm{a}}+\eps_1,\, \mu_{\bm{a}})}.
\end{align*}
Step~$(\star)$ uses independence of the players' reward streams, $(\dagger)$ applies Lemma~\ref{lem:chernoff} to each factor, and $(\ddagger)$ sums the geometric series via $\sum_{s\geq 0} e^{-sx} \leq 1/x + 1$, absorbing the $+1$ into $n_0$.
\end{proof}

Step~$(\star)$ is where $M$ enters the exponent rather than the prefactor: a union bound over players, as in the sub-Gaussian analysis, would give $Mn_0 + 1/(2\eps_1^2)$ and no gain over $M=1$.

\begin{theorem}\label{thm:probB}
For known horizon $T$, the expected regret of Algorithm~\ref{alg:probB} satisfies
\begin{align}\label{eq:probB-bound}
R_T &\leq \sum_{\bm{a}: \Delta_{\bm{a}} > 0} \inf_{\substack{\eps_1, \eps_2 > 0,\, \delta' \in (0, \eps_2) \\ \eps_1 + \eps_2 < \Delta_{\bm{a}}}} \Delta_{\bm{a}} \bigg( \frac{2\log T}{D_{KL}(\mu_{\bm{a}} + \eps_1, \mu^* - \eps_2)} \notag \\
&\quad + \frac{1}{M\, D_{KL}(\mu_{\bm{a}} + \eps_1, \mu_{\bm{a}})} + \frac{2\log T}{D_{KL}(\mu^* - \delta', \mu^* - \eps_2)} \notag \\
&\quad + \frac{1}{2(\delta')^2}\bigg) + 2 M K^M \Delta_{\max}.
\end{align}
As $\eps_1, \eps_2 \to 0$ with $\delta'/\eps_2 \to 0$ the leading constant is $\sum_{\bm{a}} 2\Delta_{\bm{a}}/D_{KL}(\mu_{\bm{a}}, \mu^*)$.
\end{theorem}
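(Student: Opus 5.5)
We would follow the template of the Hoeffding analysis in \cite{chang2023optimal}, with KL intervals in place of Hoeffding ones. The proof has three pieces: (i)~a global good event under which $\bm{a}^*$ is never eliminated; (ii)~a per-arm bound on the number of pulls of each suboptimal $\bm{a}$ before it is eliminated; (iii)~a bound on the pulls spent on $\bm{a}^*$ itself, which in round-robin play over $\mathcal{D}$ paces the suboptimal arms.

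\textbf{Step 1 (good event).} Let $\mathcal{G}$ be the event that $\mu_{\bm{b}} \in I^i_{\bm{b}}(t)$ for every player $i$, joint arm $\bm{b}$ and round $t$. By the per-interval bound $\Prb(\mu_{\bm{b}}\notin I^i_{\bm{b}}(t))\le 2T^{-2}$ and a union bound over $i\in[M]$, $\bm{b}\in[K]^M$ and $t\le T$, we get $\Prb(\mathcal{G}^c)\le 2MK^M/T$. Since at most $T$ rounds are played, the contribution of $\mathcal{G}^c$ to the regret is at most $2MK^M\Delta_{\max}$. This is the additive term in \eqref{eq:probB-bound}, and it is why the fixed threshold $2\log T$ of Remark~\ref{rem:threshold} is needed.

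On $\mathcal{G}$, every $\UCB^{i,\mathrm{KL}}_{\bm{b}}(t) \ge \mu_{\bm{b}}$ and every $\LCB^{i,\mathrm{KL}}_{\bm{b}}(t)\le\mu_{\bm{b}}$. Hence $\LCB^{i,\mathrm{KL}}_{\bm{b}}(t)\le \mu^*\le \UCB^{i,\mathrm{KL}}_{\bm{a}^*}(t)$ for all $\bm{b}$, so no player ever deviates on $\bm{a}^*$ and it stays in $\mathcal{D}$. Each deviation costs at most one extra round of regret per eliminated arm, which we fold into the constants.

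\textbf{Step 2 (suboptimal arm).} Fix a suboptimal $\bm{a}$. Arm $\bm{a}$ is removed at its first turn $t$ at which some player $i$ has $\LCB^{i,\mathrm{KL}}_{\bm{a}^*}(t) > \UCB^{i,\mathrm{KL}}_{\bm{a}}(t)$. We separate the two arms at the level $\tau'=\mu^*-\eps_2$, treating each side differently.

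\emph{Suboptimal side.} Player $i$ has $\UCB^{i,\mathrm{KL}}_{\bm{a}}<\tau'$ as soon as $D_{KL}(\hat\mu^i_{\bm{a},s},\tau')>2\log T/s$. The number of samples $s$ at which this fails for \emph{every} player is at most $\bar\kappa$. Lemma~\ref{lem:disjoint-Mfold} bounds $\E[\bar\kappa]$, giving the first two terms.

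\emph{Optimal side.} We use the tolerance $\delta'$ rather than requiring $\hat\mu^i_{\bm{a}^*}\ge\mu^*$ as in Lemma~\ref{lem:disjoint-pair}, since that event has probability only about $1/2$. We apply the analogue of Lemma~\ref{lem:disjoint-Mfold} to $\bm{a}^*$: once $n_{\bm{a}^*} \ge 2\log T/D_{KL}(\mu^*-\delta',\tau')$ and $\hat\mu^i_{\bm{a}^*}\ge\mu^*-\delta'$, we get $\LCB^{i,\mathrm{KL}}_{\bm{a}^*}>\tau'$ by the same monotonicity argument as in Lemma~\ref{lem:disjoint-pair}. By Lemma~\ref{lem:chernoff} and Pinsker, the expected number of samples at which $\hat\mu^i_{\bm{a}^*,s}<\mu^*-\delta'$ is at most $\sum_s e^{-2s\delta'^2}\le 1/(2\delta'^2)$. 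This gives the third and fourth terms.

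\textbf{Step 3 (combining the sides).} Because $\mathcal{D}$ is played round-robin, $n_{\bm{a}}(t)$ and $n_{\bm{a}^*}(t)$ differ by at most one while both remain in $\mathcal{D}$. So $n_{\bm{a}}(T)$ is bounded by the number of samples at which player $i$ fails the suboptimal-side separation plus the number at which it fails the optimal-side separation, for a single player $i$. We bound the sum of the two bad-sample counts by the sum of their expectations; multiplying by $\Delta_{\bm{a}}$, summing over $\bm{a}$ and taking the infimum over $\eps_1,\eps_2,\delta'$ gives \eqref{eq:probB-bound}.

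For the asymptotic constant, note that $D_{KL}(\mu^*-\delta',\mu^*-\eps_2)\to0$ as $\eps_2\to0$. Its $\log T$ term therefore does not vanish relative to the first. We will have to argue that the optimal-side count is subsumed, for instance by letting $\eps_2\to0$ more slowly than the first term tightens, or by noting that it is counted in pulls of $\bm{a}^*$. We are not fully sure the stated constant $\sum_{\bm{a}} 2\Delta_{\bm{a}}/D_{KL}(\mu_{\bm{a}},\mu^*)$ follows cleanly.

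\textbf{Main obstacle.} The hardest step is coupling the two sides at one player. Lemma~\ref{lem:disjoint-Mfold} bounds $\bar\kappa=\min_i\kappa_i$, but the player achieving the minimum must also be the one whose optimal-side interval has separated. Our first attempt is the crude bound $\min_i(\kappa_i+\kappa_i^*)\le \kappa_{i_0}+\kappa^*_{i_0}$ with $i_0=\argmin_i\kappa_i$. We would then bound $\E[\kappa^*_{i_0}]$ by the single-player quantity, using independence of $\bm{a}^*$'s stream from $\bm{a}$'s stream to justify that $i_0$ is independent of $\kappa^*_{i_0}$.
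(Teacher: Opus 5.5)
For the displayed inequality your plan follows the paper's proof: the same good event $\mathcal G$, the same level $\tau'=\mu^*-\eps_2$, Lemma~\ref{lem:disjoint-Mfold} for the suboptimal side, and Chernoff plus Pinsker with tolerance $\delta'$ for the optimal side. At the coupling step you are more careful than the paper, and you need to be. The paper fixes player~1 as the optimal-side witness and asserts that once $\LCB^{1,\mathrm{KL}}_{\bm a^*}>\tau'$, survival of $\bm a$ forces the suboptimal-side indicator at \emph{every} player. But line~4 of Algorithm~\ref{alg:probB} needs both separations at the \emph{same} player. A fixed witness therefore only gives $n_{\bm a}(T)\le\kappa^*_1+\kappa_1$ pathwise: player~1 may have an unlucky $\bm a$-stream while the player with a lucky one has an unlucky $\bm a^*$-stream.

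Your repair works. On $\mathcal G$, $n_{\bm a}(T)\le\kappa_i+\kappa^*_i$ for every $i$. You need $\mathcal G$ here so that a failed $D_{KL}$ indicator means $\UCB^{i,\mathrm{KL}}_{\bm a}<\tau'$ (resp.\ $\LCB^{i,\mathrm{KL}}_{\bm a^*}>\tau'$), rather than $\hat\mu^i$ lying far on the wrong side of $\tau'$. Then take $i_0$ with $\kappa_{i_0}=\bar\kappa$. State the independence precisely: $i_0$ is a function of the $\bm a$-reward streams only, hence independent of the whole vector $(\kappa^*_1,\ldots,\kappa^*_M)$, whose entries are identically distributed. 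So $\E[\kappa^*_{i_0}]=\sum_i\Prb(i_0=i)\,\E[\kappa^*_i]=\E[\kappa^*_1]$. Separately, deviation rounds cannot be ``folded into the constants'' of a bound whose constants are explicit. On $\mathcal G$ there are at most $K^M-1$ of them, each at an arbitrary joint action, costing up to $(K^M-1)\Delta_{\max}$. Neither the display nor the paper's proof includes this term.

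The genuine gap is the asymptotic constant, and your doubt is justified: it does not follow from the display, and the paper's argument for it is wrong. The paper sets $\eps_1=\eps_2=(\log T)^{-1/2}$ and $\delta'=\eps_2/\log\log T$, then claims the optimal-side term is a $(\log\log T)^{-2}$ fraction of the leading one and the $(\delta')^{-2}$ tail is $o(\log T)$. In fact $D_{KL}(\mu^*-\delta',\mu^*-\eps_2)=\Theta(\eps_2^2)=\Theta(1/\log T)$, so the third term is $\Theta((\log T)^2)$. Likewise $1/(2(\delta')^2)=\tfrac12\log T\,(\log\log T)^2$, which is not $o(\log T)$.

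Neither of your rescues can work:
\begin{itemize}
\item The $\kappa^*$ pulls are pulls of $\bm a$, paid at rate $\Delta_{\bm a}$, so they are not absorbed into pulls of $\bm a^*$.
\item For every admissible $(\eps_1,\eps_2,\delta')$, the $\log T$-coefficient $2/D_{KL}(\mu_{\bm a}+\eps_1,\mu^*-\eps_2)+2/D_{KL}(\mu^*-\delta',\mu^*-\eps_2)$ strictly exceeds $2/D_{KL}(\mu_{\bm a},\mu^*)$. The first summand alone already does, by Lemma~\ref{lem:kl-mono}(a), and the second diverges as $\eps_2\to0$.
\end{itemize}
Fixing $\tau'$ and letting $\eps_1,\delta'\to0$ slowly, the best the display yields is $\limsup_T R_T/\log T\le\sum_{\bm a}\Delta_{\bm a}\inf_{\tau'\in(\mu_{\bm a},\mu^*)}\{2/D_{KL}(\mu_{\bm a},\tau')+2/D_{KL}(\mu^*,\tau')\}$.

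No sharper analysis of this algorithm can reach the stated constant either. A player eliminates $\bm a$ only when its two intervals, built from sample counts differing by at most one, are separated by some level $\tau$. By the law of large numbers this requires, with probability tending to one, $n_{\bm a}\ge(1-o(1))\,2\log T/\max_{\tau}\min\{D_{KL}(\mu_{\bm a},\tau),D_{KL}(\mu^*,\tau)\}$. This max--min is strictly below $D_{KL}(\mu_{\bm a},\mu^*)$, by a factor near four when the means are close. So your proposal, once tidied, proves the displayed inequality, but the theorem's final sentence cannot be obtained this way.
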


\begin{proof}
Fix a suboptimal $\bm{a}$, tolerances $\eps_1, \eps_2>0$ with $\eps_1+\eps_2 < \Delta_{\bm{a}}$, $\delta' \in (0,\eps_2)$, and $\tau' = \mu^*-\eps_2$.

\emph{Step 1 (Good event).} Let $\mathcal{G} = \{\mu_{\bm{b}} \in I^i_{\bm{b}}(t)\ \forall i, \bm{b}, t \leq T\}$; a union bound gives $\Prb(\mathcal{G}^c) \leq 2MK^M/T$, so $\mathcal{G}^c$ contributes at most $2MK^M\Delta_{\max}$ to $\E[R_T]$, the additive term of \eqref{eq:probB-bound}. On $\mathcal{G}$, $\LCB^{i,\mathrm{KL}}_{\bm{b}} \leq \mu_{\bm{b}} < \mu^* \leq \UCB^{i,\mathrm{KL}}_{\bm{a}^*}$ for every suboptimal $\bm{b}$ and every player, so $\bm{a}^*$ is never eliminated. We work on $\mathcal{G}$ below.

\emph{Step 2 (Decomposition).} By line~4 of Algorithm~\ref{alg:probB}, $\bm{a}$ is eliminated once one player $i$ has $\LCB^{i,\mathrm{KL}}_{\bm{a}^*}(t) > \UCB^{i,\mathrm{KL}}_{\bm{a}}(t)$, which Lemma~\ref{lem:disjoint-pair} guarantees on $\mathcal{C}^i(t)$ once $n_0, n_0^*$ are met. Take player~$1$ as a fixed witness for the optimal side---any single choice works, and fixing it is what denies that term an $M$-fold gain---and define
\begin{equation}\label{eq:kstar-def}
\kappa^* := \sum_{m=1}^{T} \mathbf{1}\!\Big\{ D_{KL}(\hat{\mu}^1_{\bm{a}^*\!,m}, \tau') \leq \tfrac{2\log T}{m}\Big\},
\end{equation}
the number of pull counts $m$ of $\bm{a}^*$ at which player~$1$'s lower bound has not yet risen above $\tau'$. Each pull of $\bm{a}$ falls in one of two cases. If $n_{\bm{a}^*}(t)$ lies in the indicator set of \eqref{eq:kstar-def}, then since each cycle through $\mathcal{D}$ pulls every candidate at most once, at most one pull of $\bm{a}$ occurs for each such value, contributing at most $\kappa^*$ pulls. Otherwise player~$1$'s optimal-side condition holds, so for $\bm{a}$ to survive no player's suboptimal-side condition may fire, i.e.\ $D_{KL}(\hat{\mu}^i_{\bm{a},n_{\bm{a}}(t)}, \tau') \leq 2\log T/n_{\bm{a}}(t)$ for all $i$, adding one to $\bar\kappa$. Hence $n_{\bm{a}}(T) \leq \kappa^* + \bar{\kappa}$ on $\mathcal{G}$.

\emph{Step 3 (Bounding the counts).} The split-and-Chernoff argument of \cite[Lem.~10.8]{lattimore2020bandit}, with monotonicity on $[0,\hat\mu^1_{\bm{a}^*\!,m}]$ and the lower tail of Lemma~\ref{lem:chernoff}, gives $\E[\kappa^*] \leq 2\log T/D_{KL}(\mu^*-\delta', \mu^*-\eps_2) + 1/(2(\delta')^2)$; its $\log T$ scaling matches the leading term but its constant vanishes as $\delta' \to 0$ slowly relative to $\eps_2$. Lemma~\ref{lem:disjoint-Mfold} bounds $\E[\bar\kappa]$. Multiplying by $\Delta_{\bm{a}}$, summing over suboptimal arms, optimizing over $(\eps_1,\eps_2,\delta')$ and adding Step~1 yields \eqref{eq:probB-bound}. For the asymptotic claim take $\eps_1=\eps_2=(\log T)^{-1/2}$, $\delta' = \eps_2(\log\log T)^{-1}$: then $D_{KL}(\mu_{\bm{a}}+\eps_1, \mu^*-\eps_2) \to D_{KL}(\mu_{\bm{a}},\mu^*)$, the optimal-side term is a $(\log\log T)^{-2}$ fraction of the leading one, and the $(\delta')^{-2}$ tail is $o(\log T/D_{KL}(\mu_{\bm{a}},\mu^*))$.
\end{proof}

The sub-Gaussian analogue in \cite{chang2023optimal} replaces $D_{KL}(\mu_{\bm{a}}+\eps_1, \mu^*-\eps_2)$ by $(\Delta_{\bm{a}}-\eps_1-\eps_2)^2/2$, strictly smaller by Pinsker: at $\mu_{\bm{a}}=0.1$, $\mu^*=0.2$, $\eps_1=\eps_2=0.02$ these are $0.0227$ versus $0.0018$, a $12.6\times$ ratio, and the $1/M$ term improves analogously. The asymptotic constant exceeds the Lai--Robbins bound by a factor of two, the price of the fixed threshold (Remark~\ref{rem:threshold}). Table~\ref{tab:comparison} summarizes the three settings.

\begin{table}[t]
\caption{Leading Regret Terms: Sub-Gaussian vs.\ KL-Based Bounds}
\label{tab:comparison}
\centering
\footnotesize
\renewcommand{\arraystretch}{1.4}
\begin{tabular}{|c|c|c|}
\hline
\tstrut & \textbf{Sub-Gaussian} \cite{chang2022online, chang2023optimal} & \textbf{KL (ours)} \\
\hline
\tstrut\textbf{A} & $\displaystyle\frac{2\Delta_{\bm{a}}\log T}{(\Delta_{\bm{a}}-\eps)^2}$ & $\displaystyle\frac{\Delta_{\bm{a}}\log T}{D_{KL}(\mu_{\bm{a}}+\eps, \mu^*)}$ \\
\hline
\tstrut\textbf{B} & $\displaystyle\frac{4\Delta_{\bm{a}}\log T}{(\Delta_{\bm{a}}-\eps-\delta)^2}$ & $\displaystyle\frac{2\Delta_{\bm{a}}\log T}{D_{KL}(\mu_{\bm{a}}+\eps, \mu^*\!-\!\delta)}$ \\
\hline
\tstrut\textbf{C} & $t^{-K_0 \eps^2/(2\ln 2)}$ & $t^{-K_0 D_{KL}(\mu_{\bm{a}}+\eps, \mu_{\bm{a}})/(2\ln 2)}$ \\
\hline
\end{tabular}
\end{table}

\section{Problem C: Asymmetry in Both}\label{sec:probC}

When neither actions nor rewards are shared, elimination fails because deviations cannot be detected. We instead use deterministic sequencing of exploration and exploitation (DSEE) \cite{chang2022online}: players agree beforehand on an increasing $f: \N \to \N$ with $f(\lambda) \to \infty$ and on an ordering of joint arms. That ordering makes every player pull the same sequence of arms while exploring, so each knows which arm produced each reward; while exploiting, each commits to the arm it believes best, and one disagreement makes the joint action suboptimal. Only exploration samples are used for selection, since during exploitation the players may play different joint actions and, actions being unobservable, cannot attribute the rewards they see. Explorations start at powers of two to keep players synchronized, and the setting needs a unique optimal joint action: if two tie, players commit to different ones with constant probability at every phase, giving linear regret \cite{chang2022online}.

\begin{algorithm}[t]
\caption{\texttt{mKL-DSEE} for Problem C}\label{alg:probC}
\small
\begin{algorithmic}[1]
\REQUIRE Joint arms $[K]^M$, increasing function $f$, init.\ $\lambda = 1$.
\FOR{$t < T$}
  \IF{$t = 2^k$ for some $k \in \N$}
    \STATE \textbf{Explore:} Each player pulls every joint arm $f(\lambda)$ times using the agreed ordering.
    \STATE Each player $i$ selects $\bm{a}^i = \argmax_{\bm{a}} \hat{\mu}^i_{\bm{a}}$ using all exploration samples and plays component $\bm{a}^i[i]$.
    \STATE $\lambda \gets \lambda + 1$.
  \ELSE
    \STATE \textbf{Exploit:} Each player plays $\bm{a}^i[i]$.
  \ENDIF
\ENDFOR
\end{algorithmic}
\end{algorithm}

\begin{theorem}\label{thm:probC}
The regret of Algorithm~\ref{alg:probC} satisfies
\begin{align}\label{eq:probC-bound}
R_T &\leq \sum_{\bm{a}} \Delta_{\bm{a}}\, f\!\big(\lfloor\log_2 T\rfloor\big) \lceil\log_2 T\rceil \notag \\
&\quad + M\sum_{\bm{a}} \Delta_{\bm{a}} \sum_{t=1}^{\infty} \Big( t^{-\frac{K_0(t) D_{KL}(\mu_{\bm{a}}-\eps,\, \mu_{\bm{a}})}{2\ln 2}} \notag\\
&\qquad\qquad\qquad\quad + t^{-\frac{K_0(t) D_{KL}(\mu_{\bm{a}}+\eps,\, \mu_{\bm{a}})}{2\ln 2}} \Big),
\end{align}
where $K_0(t) = \sum_{j=1}^{\lfloor \log_2 t \rfloor} f(j)$ is the number of exploration samples of each arm by round~$t$.
\end{theorem}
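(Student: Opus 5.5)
We split the regret of Algorithm~\ref{alg:probC} into an exploration part and an exploitation part, following the DSEE analysis of \cite{chang2022online}. The only change is that the Hoeffding tail used there is replaced by the Chernoff bound of Lemma~\ref{lem:chernoff}.

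\emph{Exploration.} Exploration phases begin only at rounds $t=2^k$, so by horizon $T$ at most $\lceil\log_2 T\rceil$ phases have started, and their indices satisfy $\lambda\le\lfloor\log_2 T\rfloor$ up to the boundary phase. Since $f$ is increasing, each phase pulls every joint arm $\bm a$ at most $f(\lfloor\log_2 T\rfloor)$ times. The agreed ordering makes all players pull the same joint arm in every exploration round, so each such pull costs exactly $\Delta_{\bm a}$. Summing over phases gives the first term $\sum_{\bm a}\Delta_{\bm a} f(\lfloor\log_2 T\rfloor)\lceil\log_2 T\rceil$. Bounding the number of phases by $\lceil\log_2 T\rceil$ and each phase length by its maximum is crude but enough.

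\emph{Exploitation.} Consider an exploitation round $t$. By then every joint arm has $K_0(t)=\sum_{j\le\lfloor\log_2 t\rfloor}f(j)$ exploration samples at every player, and the exploitation rewards are never used. The round can be suboptimal only if some player $i$ has $\bm a^i\neq\bm a^*$. Uniqueness of $\bm a^*$ gives a margin: choose $\eps$ below half the minimal gap. If $\bm a^i\neq\bm a^*$, then for some arm $\bm a$ the empirical mean $\hat\mu^i_{\bm a}$ deviates from $\mu_{\bm a}$ by at least $\eps$ in one direction. For suboptimal arms this is an upward deviation; for $\bm a^*$ it is a downward one. A union bound over the $M$ players and the arms, together with Lemma~\ref{lem:chernoff} applied to the $K_0(t)$ i.i.d.\ Bernoulli exploration samples, bounds the probability of a wrong commitment at round $t$ by
\[
M\sum_{\bm a}\Big(e^{-K_0(t)D_{KL}(\mu_{\bm a}-\eps,\mu_{\bm a})}+e^{-K_0(t)D_{KL}(\mu_{\bm a}+\eps,\mu_{\bm a})}\Big).
\]
Including both tails for every arm is wasteful but matches the stated form.

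We charge the per-round regret against $\Delta_{\bm a}$ of the offending arm. To obtain the stated $t$-power form, we rewrite each tail $e^{-K_0 D}$ as a power of $t$. Since $2^{\lfloor\log_2 t\rfloor}\le t<2^{\lfloor\log_2 t\rfloor+1}$, we have $\lfloor\log_2 t\rfloor\ge(\log_2 t)/2$ for $t\ge 2$, hence $\ln t\le 2\ln2\,\lfloor\log_2 t\rfloor$. This yields the weak comparison $e^{-K_0 D}\le t^{-K_0 D/(2\ln 2)}$ whenever $K_0 D\ge0$ and $\ln t\le 2\ln 2$ times the exponent scale. Concretely, we would use $e^{-x}\le t^{-x/(2\ln 2)}$, which holds iff $\ln t\le 2\ln 2$ ($t\le4$). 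For larger $t$ we instead exploit $K_0(t)\ge\lfloor\log_2 t\rfloor$, since $f\ge1$. Summing the resulting bounds over $t\ge1$ gives the second term, and adding the exploration part completes \eqref{eq:probC-bound}.

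\emph{Main obstacle.} The delicate step is the last conversion. Matching the exponent $K_0(t)D/(2\ln2)$ exactly requires care with how the $\log_2$ versus $\ln$ factors and the floor interact. The cleanest fix we would try first is to rewrite $e^{-K_0 D}$ as $t^{-K_0 D/\ln t}$ and bound $\ln t$ by $2\ln 2\cdot\lfloor\log_2 t\rfloor$. We would absorb the extra factor of $\lfloor\log_2 t\rfloor$ by using $K_0(t)\ge\lfloor\log_2 t\rfloor\, f(1)$, or else state the bound with the exponent as in \cite{chang2022online} and verify the constants there. If neither route closes, we would weaken the claim to $\lfloor\log_2 t\rfloor\ge1$.

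The other point to check is synchronization: the exploration rounds must be identical across players, which the powers-of-two schedule ensures, so that every exploration sample is attributed to the correct joint arm.
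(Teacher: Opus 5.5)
Your exploration accounting and the per-round union bound via Lemma~\ref{lem:chernoff} follow the paper's sketch. However, the step you flag as the main obstacle is a genuine gap, and none of your proposed fixes closes it.

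Write $D$ for $D_{KL}(\mu_{\bm a}\pm\eps,\mu_{\bm a})$. Then $t^{-K_0(t)D/(2\ln 2)}=e^{-K_0(t)D\log_2 t/2}$, which is strictly smaller than $e^{-K_0(t)D}$ for every $t>4$ with $K_0(t)D>0$. The Chernoff exponent is tight to first order, so for large $t$ these $t$-powers are smaller than the deviation probabilities themselves. No termwise argument can produce them. The paper's sketch invokes ``$e^{-K_0(t)c}\le t^{-K_0(t)c/(2\ln 2)}$ for $t\ge 4$'', which is exactly this inequality with the direction reversed. Going back to the source, or to \cite{chang2022online}, will therefore not supply the missing step.

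Your rewriting $e^{-K_0D}=t^{-K_0D/\ln t}$ together with $\ln t\le 2\ln 2\lfloor\log_2 t\rfloor$ is correct. But it gives $e^{-K_0(t)D}\le t^{-K_0(t)D/(2\ln 2\lfloor\log_2 t\rfloor)}$, with $\lfloor\log_2 t\rfloor$ sitting in the denominator of the exponent. Using $K_0(t)\ge f(1)\lfloor\log_2 t\rfloor$ to absorb that factor leaves only $t^{-f(1)D/(2\ln 2)}$. That is neither the stated exponent nor summable in general. What your argument actually proves is the second term with $e^{-K_0(t)D}$ in place of the stated $t$-powers, or, after weakening, with exponent $K_0(t)D/(2\ln 2\lfloor\log_2 t\rfloor)$. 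For $f(\lambda)=\lambda$ this is still summable and independent of $T$, because $K_0(t)=\Theta((\log t)^2)$.

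A second step fails as written: charging a bad exploitation round to $\Delta_{\bm a}$ of the offending arm. When player $i$ commits to $\bm a^i\neq\bm a^*$, the joint action actually played is $(\bm a^1[1],\ldots,\bm a^M[M])$. Its gap need not equal $\Delta_{\bm a^i}$ and can be as large as $\Delta_{\max}$. Moreover, one of the two causes of a wrong commitment, the event $\hat\mu^i_{\bm a^*}<\mu^*-\eps$, would be weighted by $\Delta_{\bm a^*}=0$, so it silently disappears from the bound. Conversely, the lower-tail terms $D_{KL}(\mu_{\bm a}-\eps,\mu_{\bm a})$ for suboptimal $\bm a$ that you keep ``to match the stated form'' correspond to no error event at all.

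The justified per-round bound is $\Delta_{\max}$ times your union-bound probability. That gives $M\Delta_{\max}\sum_t\big(\sum_{\bm a\neq\bm a^*}e^{-K_0(t)D_{KL}(\mu_{\bm a}+\eps,\mu_{\bm a})}+e^{-K_0(t)D_{KL}(\mu^*-\eps,\mu^*)}\big)$ with $\eps<\min_{\bm a\neq\bm a^*}\Delta_{\bm a}/2$. The paper's sketch passes over this point as well. The per-arm weighting in the statement would need an additional argument that neither you nor the sketch supplies.
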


\begin{proof}[Proof sketch]
The argument follows \cite{chang2022online} with KL exponents in place of sub-Gaussian ones. At most $\lceil \log_2 T \rceil$ phases each pull every joint arm $f(\lambda)$ times, giving the first term. In an exploitation round $t$ every player holds $K_0(t)$ samples of each arm, and the joint action is suboptimal only if $\hat{\mu}^i_{\bm{a}} \geq \hat{\mu}^i_{\bm{a}^*}$ for some $i$ and some $\bm{a} \neq \bm{a}^*$, which for $\eps \in (0,\Delta_{\bm{a}}/2)$ implies $\hat{\mu}^i_{\bm{a}} > \mu_{\bm{a}}+\eps$ or $\hat{\mu}^i_{\bm{a}^*} < \mu^*-\eps$; Lemma~\ref{lem:chernoff} bounds each by $e^{-K_0(t)D_{KL}(\mu\pm\eps,\,\mu)}$. A union bound over the $M$ players and over an epoch of length at most $2^\lambda \leq 2t$, on which $K_0(t)$ is constant, together with $e^{-K_0(t)c} \leq t^{-K_0(t)c/(2\ln 2)}$ for $t \geq 4$, gives the second term. By Lemma~\ref{lem:kl-mono}(c) these exponents dominate the sub-Gaussian exponent $K_0(t)\eps^2$, with ratio tending to $\frac{1}{2\mu(1-\mu)}$ as $\eps \to 0$ by Taylor expansion of the Bernoulli variance---the source of the empirical gain. The series converges once the exponent exceeds one: with $f(\lambda)=\lambda$, $K_0(t) = \Theta((\log_2 t)^2)$ and the bound is independent of $T$.
\end{proof}

Under sub-Gaussian rewards \cite{chang2022online} replaces $D_{KL}(\mu_{\bm{a}} \pm \eps, \mu_{\bm{a}})$ by $\eps^2$ here, so by Pinsker our exponents are strictly larger and the series converges with fewer exploration samples. Since $f$ trades exploitation error against exploration cost, the faster decay means a slower-growing $f$ suffices for the same error; the optimal $f$ still depends on the unknown gap $\min_{\bm{a}}\Delta_{\bm{a}}$, an unavoidable cost of having no shared information.

\section{Experiments}\label{sec:experiments}

We simulate all three settings with $T=100{,}000$, averaged over 10 runs, with all means in $[0.02,0.15]$ to target the case where KL bounds help most. Problem A uses 9 joint arms with means $(0.05, 0.03, 0.08, 0.04, 0.15, 0.06, 0.07, 0.02, 0.10)$; Problems B and C use $M=3$ players over 4 joint arms with means $(0.05, 0.03, 0.08, 0.04)$.

\begin{figure}[t]
\centering
\includegraphics[width=0.84\columnwidth]{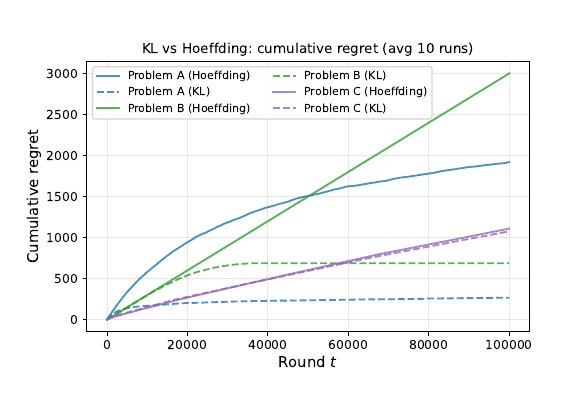}
\caption{Cumulative regret (solid: Hoeffding baseline; dashed: KL algorithm). Average over 10 runs, $T=100{,}000$.}
\label{fig:regret}
\end{figure}

The variants differ in one place each: line~3 of Algorithm~\ref{alg:probA}, where the baseline uses $\hat\mu_{\bm{a}}(t) + \sqrt{2\log f(t)/n_{\bm{a}}(t)}$ instead of \eqref{eq:klucb}, and the endpoints of \eqref{eq:int-B} in line~4 of Algorithm~\ref{alg:probB}, where it uses $\hat\mu^i_{\bm{a}}(t) \pm \sqrt{2\log T/n_{\bm{a}}(t)}$. No confidence bound appears in Algorithm~\ref{alg:probC}, so both Problem C variants run the same procedure with $f(\lambda)=\lambda$ and the bound enters only through the analysis, which is why those curves nearly coincide. Means, seeds and orderings are identical throughout.

The findings track the theory in Fig.~\ref{fig:regret}. \texttt{mKL-UCB} cuts Problem A regret by $\approx 7.6\times$ ($1{,}900 \to 250$), close to the predicted $D_{KL}(0.05,0.15)/(\Delta^2/2) \approx 9.1$. \texttt{mKL-UCB-Intervals} plateaus near $700$ on Problem B once the intervals separate, while the Hoeffding variant grows nearly linearly past $3{,}000$---a direct illustration of the threshold $n_0$ of Lemma~\ref{lem:disjoint-pair} and Remark~\ref{rem:disjoint-vs-hoeff}. Problem C shows only a marginal gap, consistent with \eqref{eq:probC-bound}: the KL exponent buys a smaller exploration schedule, but at a fixed schedule both pay the same exploration cost, which dominates at this horizon.

\textbf{Where the gap closes.} The factor $\frac{1}{2\mu(1-\mu)}$ is minimized at $\mu=1/2$, where Pinsker is tight and it equals $2$; it stays within $20\%$ of that floor for $\mu \in [0.30,0.70]$ and exceeds $5.6$ once $\mu<0.1$ or $\mu>0.9$. In Problems A and B it multiplies the leading $\log T$ term, so we expect clearly separated curves for means outside $[0.3,0.7]$ and near-indistinguishable ones inside it; in Problem C it sits in a tail exponent, so the visible gap is smaller at every mean.

\section{Discussion and Conclusion}\label{sec:conclusion}

We specialized the framework of \cite{chang2022online, chang2023optimal} to Bernoulli rewards via \texttt{mKL-UCB}, \texttt{mKL-UCB-Intervals} and \texttt{mKL-DSEE}, each improving on the sub-Gaussian baseline by a factor governed by Pinsker's inequality, with the largest gains for extreme means. The centerpiece is Problem B: a deterministic condition for two KL intervals to separate (Lemma~\ref{lem:disjoint-pair}) plus an $M$-fold gain (Lemma~\ref{lem:disjoint-Mfold}) from independent rewards.

\textbf{Why Bernoulli.} Our arguments use two structural facts: the Chernoff exponent is a KL divergence (Lemma~\ref{lem:chernoff}), and that divergence is convex and monotone in each argument (Lemma~\ref{lem:kl-mono}(a)). Both hold for any one-parameter exponential family \cite{cappe2013kullback}, so Lemma~\ref{lem:disjoint-pair} and Theorems~\ref{thm:probA}--\ref{thm:probB} carry over verbatim. What does not carry over is the size of the gain: the factor is $1/(2V(\mu))$ with $V$ the variance function, which is large only when $V$ is small relative to the sub-Gaussian variance proxy. For Bernoulli, $V(\mu)=\mu(1-\mu)$ collapses at the boundary, exactly where the motivating applications live; for families whose variance stays bounded away from zero the specialization buys at most a constant. Bernoulli is thus where the payoff is largest.

\textbf{Anytime elimination.} Remark~\ref{rem:threshold} is not an artifact of the analysis. Elimination is irreversible, so one wrong removal of $\bm{a}^*$ costs $\Theta(T)$ and the probability that any interval ever excludes its mean must be $O(1/T)$, whereas an anytime threshold of order $\log t$ leaves a failure probability of order $t^{-2}$ at round $t$, which sums to a constant. Doubling does recover one: run Algorithm~\ref{alg:probB} on blocks of length $2^r$ with threshold $2\log 2^r$ and $\mathcal{D}$ reset at each block, so each contributes $O(r/D_{KL})$ and the total is $O(\log^2 T)$. Whether the extra $\log T$ is necessary, or a rule allowing occasional reinstatement attains $O(\log T)$ without knowing $T$, is open.

\textbf{Tightness of $1/M$.} That the $M$-fold gain sits in the additive term is structural: all players pull the same joint arm each round, so the leading $\log T$ term is set by the samples one player needs for its own intervals to separate (Lemma~\ref{lem:disjoint-pair}), not by the $Mn_{\bm{a}}(t)$ samples the group holds. Pooling those samples would divide the leading term by $M$, and pooling is what the absence of communication forbids. Algorithm~\ref{alg:probB} does carry one bit from each player each round through the deviation channel; whether that channel can aggregate estimates without breaking coordination is the more promising form of the question.

\bibliographystyle{IEEEtran}
\bibliography{references}

\end{document}